\newcommand{\poly}{\mbox{poly}}
\newtheorem{thm}{Theorem}[section]
\newtheorem{prop}[thm]{Proposition}
\newtheorem{thm-algo}[thm]{Algorithm}
\newtheorem{clm}[thm]{Claim}
\newtheorem{defn}[thm]{Definition}
\newtheorem{rmk}[thm]{Remark}
\newtheorem{fact}[thm]{Fact}
\numberwithin{equation}{section}
\numberwithin{figure}{section}
\title{Tight query complexity bounds for learning graph partitions}
\author{
    Xizhi Liu\footnotemark[3]$\,$ \footnotemark[1]
    $\qquad$ 
	Sayan Mukherjee\footnotemark[2]$\,$ \footnotemark[1]
	}
\begin{document}
	\maketitle
	\begin{abstract}
	Given a partition of a graph into connected components, the membership oracle asserts whether any two vertices of the graph lie in the same component or not.
	We prove that for $n\ge k\ge 2$, learning the components of an $n$-vertex hidden graph with $k$ components requires at least $(k-1)n-\binom k2$ membership queries.
	Our result improves on the best known information-theoretic bound of $\Omega(n\log k)$ queries, and exactly matches the query complexity of the algorithm introduced by Reyzin and Srivastava~\cite{reyzin-srivastava-Learning2007} for this problem.
	Additionally, we introduce an oracle, with access to which one can learn the number of components of $G$ in asymptotically fewer queries than learning the full partition, thus answering another question posed by the same authors.
	Lastly, we introduce a more applicable version of this oracle, and prove asymptotically tight bounds of $\widetilde\Theta(m)$ queries for both learning and verifying an $m$-edge hidden graph $G$ using it.
\end{abstract}

\footnotetext[3]{Department of Mathematics, Statistics and Computer Science, University of Illinois at Chicago, USA.}
\footnotetext[3]{Mathematics Institute and DIMAP, University of Warwick, Coventry, CV4 7AL, UK. {\tt Email:xizhi.liu@warwick.ac.uk}}
\footnotetext[2]{Blueqat Research, Shibuya 2-24-12, Tokyo, Japan. {\tt Email:sayan@blueqat.com}.}
\footnotetext[1]{Equal contribution}

{\bf Keywords: }	graph learning, graph reconstruction, query complexity

{\it Accepted for presentation at the Conference on Learning Theory (COLT) 2022.}
\section{Introduction}

\subsection{Background and Applications}

A graph $G=(V,E)$ consists of a vertex set $V$ and an edge set $E\subseteq \binom V2$.
The field of graph learning deals with learning a hidden graph using queries to black-box oracles that reveal partial information about the graph.
In several real world scenarios, learning a full graph by checking only pairwise adjacency is inefficient, and several oracles can speed up the process of learning by encoding more information per query.
Different oracles could be useful or easier to implement in different scenarios.
For example, in the context of trying to learn a hidden network graph, a \emph{traceroute} query between a pair of vertices can give information about a shortest path between the vertices and their distance in the graph.
In the context of bioinformatics, one can model a graph with vertices corresponding to chemicals, and two vertices are joined by an edge if they react when mixed together.
In such a situation, oracles such as edge-detection and edge-counting can be implemented by mixing different sets of chemicals and measuring the intensity of reaction.
As calling an oracle incurs cost, researchers try to estimate the \emph{query complexity}: the least number of queries to the oracle required to learn a specific graph or graph property.

A separate type of problem that is also widely studied is the problem of graph verification.
In this setting, we have a hidden graph $G=(V,E)$ and a known graph $\widehat G=(V,\widehat E)$, and an oracle that reveals information about $G$.
The main task in this area is to verify whether $G=\widehat G$ using as few oracle queries as possible.
In the context of our paper, we assume $|\widehat E| = |E|$, as $G$ and $\widehat G$ are trivially unequal if they do not have the same number of edges.
Verification tasks are prevalent in real networks where it is important to make sure a recent snapshot of a network is accurate.

Due to their practical and theoretical importance, both graph learning and verification have garnered a lot of interest in recent years.
Perhaps the first problem considered in the literature was the problem of learning a degree-bounded tree using the shortest path oracle \cite{hein1989optimal,king2003complexity}.
The shortest path oracle and the distance oracle were extensively studied in \cite{reyzin-srivastava-Learning2007,kannan2018graph,rong2021reconstruction}.
The current best upper bound on learning a connected bounded-degree graph on $n$ vertices is due to  Mathieu and Zhou~\cite{mathieu2013graph}, where they provide a randomized $\widetilde O(n^{3/2})$-query algorithm to learn such a graph.
Abrahamsen et. al.~\cite{abrahamsen2016graph} prove the exact bound on the query complexity using a weaker oracle called the \emph{betweenness oracle}, thus suggesting that the bound of $\widetilde O(n^{3/2})$ might be not tight.
Parallel results in graph verification have also been obtained for the distance oracle \cite{kannan2015near} and betweenness oracle \cite{janardhanan2017graph}.
Although there is still a gap between the lower bound of $\Omega(n)$ and upper bound of $\widetilde O(n^{3/2})$ for learning degree-bounded graphs using distance oracle, the gap was closed recently for random degree-bounded regular graphs by Mathieu and Zhou~\cite{mathieu2021simple}.

Another well-studied oracle is the \emph{edge-detection oracle}, which, given a set of vertices of the hidden graph $G$, tells if it is an independent set or not.
Results on learning restricted classes of graphs using this oracle such as matchings \cite{alon2004learning}, stars and cliques \cite{alon2005learning}, Hamiltonian cycles \cite{grebinski2000optimal} were succeeded by a very general treatment by  Angluin and Chen~\cite{angluin2008learning}.
Using a recursive coloring argument, they show that $O(m\log n)$ edge-detection queries are sufficient to learn an arbitrary hidden graph.
Reyzin and Srivastava~\cite{reyzin-srivastava-Learning2007} consider and compare the shortest path, edge-detection and edge-counting queries and prove a variety of lower and upper bounds for learning partitions, trees and arbitrary graphs.
In other related work, Beerliova et. al.~\cite{beerliova2006network} consider an oracle called the layered-graph oracle.

\subsection{Our Results}
\subsubsection{Membership queries}
Every graph $G$ admits a partition of its vertex set into connected components.
We say that a learner \emph{learns the components of $G$} if it learns this partition.
For an $n$-vertex hidden graph $G$ with $k$ components, one of the problems studied in \cite{reyzin-srivastava-Learning2007} entails learning the components of $G$.
More precisely, if $\alpha$ denotes the membership query given by
\[
\alpha(u,v)=\left\{
\begin{array}{rl}
	1, &\mbox{if }u\mbox{ and }v\mbox{ belong to the same component},\\
	0, &\mbox{otherwise};
\end{array}
\right.
\]
then they demonstrate an algorithm to learn the components of $G$.

\begin{thm-algo}[Reyzin and Srivastava~\cite{reyzin-srivastava-Learning2007}] \label{algo:reyzinSrivastavaMembership}
	\begin{itemize}
		\item Start with a set $S$ of already classified vertices and a set $C$ of component representatives which are both initialized to $\{v_0\}$ for some arbitrary $v_0\in V(G)$.
		\item For every unclassified vertex $v\in V(G)\setminus S$, sequentially query $\{\alpha(x,v):x\in C\}$.
		\item If some query $\alpha(x,v)$ is true, add $v$ to $S$. Otherwise, the component of $v$ has not been discovered till now, so add $v$ to both $S$ and $C$.
		\item Repeat the procedure until $S=V(G)$.
	\end{itemize}
\end{thm-algo}
It can be seen that when $k$ is small compared to $n$, Algorithm~\ref{algo:reyzinSrivastavaMembership} uses $O(nk)$ many $\alpha$-queries.
In particular, if we consider a graph $G$ which is the union of isolated vertices $\{1,\ldots, k-1\}$ and a clique $K_{\{k,\ldots, n\}}$, Proposition~\ref{prop:membershipAlgoUpperBd} demonstrates that Algorithm~\ref{algo:reyzinSrivastavaMembership} uses $(k-1)n -\binom{k}{2}$ queries in the worst case.

\medskip
However, the best known lower bound for this fundamental problem of learning components with $\alpha$ was an information-theoretic bound of $\Omega(n\log k)$ queries.
To the knowledge of the authors, no proof of an $\Omega(nk)$ lower bound has been published till date since it was first posed by Reyzin and Srivastava~\cite{reyzin-srivastava-Learning2007}.
Our main theorem is the first exact lower bound of $(k-1)n -\binom{k}{2}$ queries to this problem, when $k$ is known to the algorithm.
\begin{thm}
	\label{thm:membershipLowerBd}
	Given any algorithm $\mathcal A$ that makes membership queries on a hidden graph $G$ with $n$ vertices and $k$ components, there is an adversary that can force $\mathcal A$ to make at least {$(k-1)n -\binom{k}{2}$} queries to learn the partition of $G$.
\end{thm}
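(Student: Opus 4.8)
The plan is to prove Theorem~\ref{thm:membershipLowerBd} by an adversary argument organized around a potential function on the learner's knowledge state. After any run of queries the only relevant information is, for each queried pair, whether the answer was ``yes'' or ``no'', so I would record the state as the graph $Y$ of ``yes''-edges together with the graph $N$ of ``no''-edges on $V$. Since being in the same component is an equivalence relation it is cleaner to pass to the quotient: let $B_1,\dots,B_r$ be the components (``blobs'') of $Y$, and let $M$ be the graph on $\{B_1,\dots,B_r\}$ with $B_iB_j\in E(M)$ whenever some ``no''-edge joins $B_i$ to $B_j$. A graph $G$ with $k$ components is consistent with the answers exactly when its component partition refines to a partition of $V(M)$ into $k$ independent sets; hence the learner is finished precisely when $M$ admits a \emph{unique} such partition, i.e.\ $M$ is uniquely $k$-colourable (or has only $k$ vertices). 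I would use the classical bound that a uniquely $k$-colourable graph on $r\ge k$ vertices has at least $(k-1)r-\binom k2$ edges, reproved in whatever elementary form is convenient here.

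Next I would introduce the potential $\phi(M)=(k-1)\,r-\binom k2-e(M)$. At the start $r=n$ and $e(M)=0$, so $\phi=(k-1)n-\binom k2$, and by the edge bound above $\phi\le 0$ once the learner has finished (one must check the boundary, namely that the adversary never reaches the state $r=k$ with $M\ne K_k$, which holds because $r$ decreases only on forced merges and a forced merge at $r=k+1$ can occur only from $M=K_{k+1}$ minus one edge). It therefore suffices to design an adversary for which \emph{every query lowers $\phi$ by at most one}: then the number of queries is at least $\phi_{\text{start}}-\phi_{\text{end}}\ge(k-1)n-\binom k2$. On a query $\alpha(u,v)$ with $u\in B_i,v\in B_j$: if $i=j$, or if $B_iB_j$ is already an edge of $M$, the answer is forced and $\phi$ is unchanged; otherwise the adversary answers ``no'' whenever $M+B_iB_j$ is still $k$-colourable, which adds one edge and drops $\phi$ by exactly one, and is forced to answer ``yes'' only when $M+B_iB_j$ would become $(k+1)$-chromatic. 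A forced ``yes'' merges $B_i,B_j$: it lowers $r$ by one and deletes the $c:=|N_M(B_i)\cap N_M(B_j)|$ resulting parallel edges, so it changes $\phi$ by $c-(k-1)$.

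The main obstacle is precisely this forced-merge case. For $k=2$ it is harmless ($c\ge 0\ge k-2$), but for $k\ge 3$ a single forced merge can drop $\phi$ by as much as $k-1$: the naive ``answer no unless forced'' strategy lets the learner first assemble a colour-critical ``Hajós'' gadget (a Moser spindle when $k=3$) two of whose blobs have no common neighbour, and then spend one cheap query to merge them. The resolution I would pursue is an amortized accounting: a forced merge of a low-$c$ pair can only occur after the learner has already inserted a $(k+1)$-critical subgraph joining the two blobs, and such a subgraph is edge-expensive by the Gallai/Kostochka--Yancey size bounds for colour-critical graphs, so its edges more than compensate for the subsequent drop. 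Making this rigorous amounts to folding an ``excess edges of inserted critical subgraphs'' term into $\phi$ so that the per-query decrease is genuinely at most one, and verifying the small-$r$ boundary cases; everything else (the equivalence-relation/quotient setup, the characterisation of ``finished'', and the consistency checks on the adversary's answers) is routine bookkeeping.
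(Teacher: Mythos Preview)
Your adversary is exactly the one the paper uses: maintain a $k$-colourable auxiliary graph, answer ``no'' whenever the new edge keeps it $k$-colourable, and answer ``yes'' only on $k$-inseparable pairs. The gap is entirely in the analysis. By passing to the quotient $M$ you lose edges at every forced merge, and you correctly see that a single merge can drop $\phi$ by as much as $k-1$. Your proposed repair via colour-critical edge bounds does not do what you want: a $(k{+}1)$-critical graph on $s$ vertices has only about $ks/2$ edges (minimum degree $k$; Kostochka--Yancey sharpens the constant but not the order), which is \emph{below} the $(k-1)s$ rate that the uniquely $k$-colourable bound demands, so there is no ``excess'' to amortise against. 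Concretely, for $k=3$ take the Gr\"otzsch graph minus one edge $uv$: it is $3$-colourable, triangle-free (so $c=0$), and $\{u,v\}$ is $3$-inseparable; the learner spends $19$ ``no'' queries to build it and then one query to force the merge, after which $\phi$ has dropped by $21$ in $20$ queries. The bound still happens to hold in that example, but your per-query inequality is genuinely violated and nothing in the sketch recovers it.

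The fix is to \emph{not take the quotient}. Keep the ``no''-graph $H$ on all $n$ vertices and simply do nothing on a ``yes'' answer. The key observation---which makes the paper's proof a two-line argument---is that you only answer ``yes'' on pairs that are $k$-inseparable in $H$, and such pairs receive the same colour in \emph{every} proper $k$-colouring of $H$ anyway; hence the ``yes'' answers add no constraint beyond what $H$ already encodes. Consequently the learner has determined the partition if and only if $H$ itself is uniquely $k$-colourable, on $n$ vertices, so $|E(H)|\ge (k-1)n-\binom{k}{2}$ by the bound you already quote. Since every edge of $H$ corresponds to a distinct ``no'' query, the total number of queries is at least $|E(H)|$ and you are done---no potential function, no merges, no amortisation.
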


{We also prove an exact bound if the number of connected components in $G$ is unknown.}
\begin{thm}
	\label{thm:membershipLowerBd-k-unknown}
	{
		Given any algorithm $\mathcal A$ that makes membership queries on a hidden graph $G$ with $n$ vertices,
		there is an adversary that can force $\mathcal A$ to make at least {$kn-\binom{k+1}{2}$} queries to learn the partition of $G$,
		where $k$ is the number of connected components in $G$.
	}
\end{thm}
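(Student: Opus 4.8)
The plan is to prove the bound by an adversary argument, in a similar spirit to Theorem~\ref{thm:membershipLowerBd} but with a strictly weaker stopping criterion for the learner, which is what accounts for the extra $n-k$ queries. It is convenient to record the target as
\[
kn-\binom{k+1}{2}=\Bigl[(k-1)n-\binom{k}{2}\Bigr]+(n-k)=k(n-k)+\binom{k}{2}.
\]
After some queries, encode the learner's knowledge by a partition of $V$ into \emph{clusters} — the classes of the equivalence relation generated by the $1$-answers — together with a graph $H$ on these clusters whose edges are the $0$-answers. A vertex partition is consistent with all answers exactly when each cluster lies in a single block and no block contains an $H$-edge; thus the consistent partitions are precisely the partitions of the clusters into $H$-independent sets. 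The one place where ``$k$ unknown'' enters is the stopping criterion: \emph{the learner can have pinned down the partition only once $H$ is complete}, since if clusters $C\neq D$ are $H$-nonadjacent then the partition putting $C\cup D$ in one block and every other cluster in a singleton block is consistent and distinct from the all-singletons partition. (With $k$ known, by contrast, reaching $k$ clusters already determines the partition; this is exactly where the gap with Theorem~\ref{thm:membershipLowerBd} comes from.)

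The adversary I would use: on a query $\alpha(u,v)$ with $u,v$ in distinct clusters $C_u,C_v$, answer $0$ and add the edge $C_uC_v$ to $H$ if $H+C_uC_v$ is still $k$-colorable, and otherwise answer $1$, merging $C_u$ and $C_v$ (a within-cluster query is answered $1$). Two short checks are needed. First, this maintains the invariant ``$\chi(H)\le k$ and there are at least $k$ clusters'': adding an allowed edge preserves both, while a merge is triggered only when $\chi(H)=k$ and $C_u,C_v$ receive the same color in every proper $k$-coloring of $H$ — so identifying them keeps $\chi\le k$ — and only while there are more than $k$ clusters. Second, $\chi(H)\le k\le(\#\text{clusters})$ is exactly the condition for a $k$-component graph (e.g.\ disjoint spanning trees on a suitable $k$-block coarsening of the clusters) to be consistent, so such a graph is available throughout; hence by the stopping criterion the learner cannot finish before $H$ is complete, and a complete $H$ on $t$ clusters has $\chi=t$, forcing $t=k$ at that moment. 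Thus the adversary commits to a graph with exactly $k$ components while forcing the learner to perform the $n-k$ merges that take $n$ singleton clusters down to $k$ and to make all $\binom{k}{2}$ pairs of final clusters $H$-adjacent.

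It remains to count, and this is where I expect the real work to be. The merges contribute at least $n-k$ distinct $1$-queries. For the $0$-queries I would track the potential $\Phi=k\cdot(\#\text{clusters})-\binom{k+1}{2}-|E(H)|$, which equals $kn-\binom{k+1}{2}$ at the start, equals $0$ in the final state, drops by exactly $1$ at each $0$-query, and changes by $|N_H(C_u)\cap N_H(C_v)|-k$ at a merge of $C_u,C_v$ (the $-k$ for losing a cluster, the positive term because the $H$-edges from $C_u$ and from $C_v$ to a common neighbor collapse into one). Summing these contributions, the total number of queries is at least $kn-\binom{k+1}{2}+\sum_{\text{merges}}\bigl(|N_H(C_u)\cap N_H(C_v)|-(k-1)\bigr)$, so the statement reduces to the aggregate inequality
\[
\sum_{\text{merges}}\bigl|N_H(C_u)\cap N_H(C_v)\bigr|\ \ge\ (k-1)(n-k),
\]
i.e.\ that the forced merges identify clusters sharing, on average, at least $k-1$ common $H$-neighbors. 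This is the main obstacle: it can fail merge-by-merge, since a merge forced by a sparse $(k+1)$-chromatic configuration of $H$ may identify two clusters with only one common neighbor. The resolution I envisage is a global charging argument — such a ``wasteful'' merge becomes available to the learner only after it has already spent extra $0$-queries building up the dense separation structure that forces it, and those surplus $0$-queries can be redistributed to cover the shortfall. Tightness of $kn-\binom{k+1}{2}$ is witnessed by the graph consisting of $k-1$ isolated vertices together with a clique on the remaining $n-k+1$ vertices, matching Algorithm~\ref{algo:reyzinSrivastavaMembership}.
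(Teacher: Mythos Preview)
Your adversary and your stopping criterion are correct and essentially coincide with the paper's; the gap is entirely in the counting. You have reduced the theorem to
\[
\sum_{\text{merges}}\bigl|N_H(C_u)\cap N_H(C_v)\bigr|\ \ge\ (k-1)(n-k),
\]
and you correctly note that this can fail badly merge by merge: for instance, removing one edge $wu_1$ from the Gr\"otzsch graph yields a $3$-chromatic graph in which $w,u_1$ are $3$-inseparable with \emph{zero} common neighbours. The ``global charging argument'' you sketch is not an argument, and this is a genuine hole.

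The paper closes it by a device that makes the whole potential--and--charging machinery unnecessary: do \emph{not} pass to the quotient. Keep the $0$-answers as a graph $H_1$ on the original $n$ vertices, and record the $1$-answers as edges of a second graph $H_2$ on $V$. Because every $1$-answer is a pair that was $k$-inseparable in $H_1$ at the time (and inseparability only gets stronger as $H_1$ grows), any proper $k$-colouring of the final $H_1$ is automatically consistent with all $1$-answers; hence if the learner has pinned down the partition, $H_1$ must be uniquely $k$-colourable on $n$ vertices, giving
\[
\#(0\text{-queries})=|E(H_1)|\ \ge\ (k-1)n-\binom{k}{2}
\]
directly from Theorem~\ref{thm:unique-kCol}. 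Separately, if $H_2$ had more than $k$ connected components, the partition of $V$ into those components would refine the true $k$-partition and still be consistent with every answer, so the learner could not have finished; thus $|E(H_2)|\ge n-k$. Adding the two bounds gives $kn-\binom{k+1}{2}$.

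Two remarks. First, since your $\#(0\text{-queries})=\binom{k}{2}+\sum_{\text{merges}}|N_H(C_u)\cap N_H(C_v)|$, the paper's bound on $|E(H_1)|$ actually \emph{proves} your aggregate inequality a posteriori --- but going through the quotient was the detour that created the difficulty in the first place. Second, the paper's decomposition also makes transparent where the extra $n-k$ over Theorem~\ref{thm:membershipLowerBd} comes from: it is exactly the $H_2$-connectivity requirement, which is absent when $k$ is known.
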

\begin{rmk}
	{
		When the number of components $k$ of a hidden graph $G$ is known to an algorithm, intuitively it should require fewer queries to learn all components of $G$.
		This is also reflected in the fact that our lower bound in Theorem~\ref{thm:membershipLowerBd} is $(n-k)$ lower than the lower bound in Theorem~\ref{thm:membershipLowerBd-k-unknown}.
		According to Proposition~\ref{prop:membershipAlgoUpperBd}, the bounds from both Theorems \ref{thm:membershipLowerBd} and \ref{thm:membershipLowerBd-k-unknown} are tight.
	}
\end{rmk}

We make a note here that Theorems~\ref{thm:membershipLowerBd} and \ref{thm:membershipLowerBd-k-unknown}, as well as the membership oracle is applicable to a much more general discrete setting: \emph{learning partitions of any finite $n$-element set into $k$ parts}.
However, the problem of learning partitions is not well-represented in the literature and hence we use the more familiar and equivalent language of graph learning for presenting Theorems~\ref{thm:membershipLowerBd} and \ref{thm:membershipLowerBd-k-unknown}.



\subsubsection{Multiple-membership queries}
Reyzin and Srivastava~\cite{reyzin-srivastava-Learning2007} also posed a question on whether there is an oracle that has to be queried fewer times to learn the number of components in an $n$-vertex hidden graph than learning the components.
For a vertex $u$ and a set $S$ of vertices not containing $u$, we define the multiple-membership query
\[
\alpha_m(u,S)=\left\{\begin{array}{rl}
	1, & \mbox{if }u\mbox{ and }v\mbox{ belong to the same component for some }v\in S,\\
	0, & \mbox{otherwise}.
\end{array}
\right.
\]
Our second result gives a positive answer to their question:
\begin{thm}
	\label{thm:alpha_mLearningComponents}
	For an $n$-vertex hidden graph $G$, learning the number of components of $G$ can be done using $O(n)$ $\alpha_m$-queries. However, learning all the components requires $\Theta(n\log k)$ $\alpha_m$-queries.
\end{thm}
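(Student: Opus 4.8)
The plan is to establish the three claimed bounds in turn: the $O(n)$-query algorithm for counting components, the $O(n\log k)$-query algorithm for learning the full partition, and the matching $\Omega(n\log k)$ lower bound for the latter problem. The two upper bounds are constructive and are essentially $\alpha_m$-analogues of Algorithm~\ref{algo:reyzinSrivastavaMembership}, while the lower bound is a decision-tree (information-theoretic) argument.

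\emph{Counting components with $O(n)$ queries.} I would run a sweep in the spirit of Algorithm~\ref{algo:reyzinSrivastavaMembership}, exploiting that a single $\alpha_m$-query tests membership against an entire set simultaneously. Maintain a set $C$ of component representatives, initialized to $\{v_0\}$ for an arbitrary $v_0\in V(G)$. For each of the remaining $n-1$ vertices $v$ in turn, issue the single query $\alpha_m(v,C)$: if the answer is $0$, then $v$ lies in a component not yet represented, so add $v$ to $C$; if the answer is $1$, simply discard $v$ — for counting we never need to know \emph{which} component $v$ joins. A short induction on the set $P$ of processed vertices gives the invariant ``$C$ contains exactly one vertex of every component that meets $P$'', so once $P=V(G)$ we have $|C|=k$. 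This uses exactly $n-1$ queries.

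\emph{Learning the partition with $O(n\log k)$ queries.} Process the vertices one at a time, maintaining an ordered list $C=(c_1,\dots,c_t)$ of representatives of the components discovered so far together with the component label of each processed vertex. For a new vertex $v$, first query $\alpha_m(v,C)$; if the answer is $0$, append $v$ as a fresh representative. Otherwise binary-search for the component of $v$: keep an interval $[\ell,r]\subseteq[1,t]$ with the invariant that the representative of $v$ lies in $\{c_\ell,\dots,c_r\}$, query $\alpha_m\bigl(v,\{c_\ell,\dots,c_{\lfloor(\ell+r)/2\rfloor}\}\bigr)$, and halve the interval according to the answer; when $\ell=r$ the component of $v$ has been identified. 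This spends at most $1+\lceil\log_2 t\rceil\le 1+\lceil\log_2 k\rceil$ queries per vertex, hence $O(n\log k)$ in total, and at the end the full partition is known.

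\emph{The $\Omega(n\log k)$ lower bound.} Fix vertices $1,\dots,k$ as anchors lying in $k$ distinct components, and let each of the remaining $n-k$ vertices be assigned freely to any one of these $k$ components (realizing each part as a clique); this produces $k^{n-k}$ distinct legal instances, all with exactly $k$ components, any two of which differ in the component of some non-anchor vertex. A deterministic algorithm that learns the partition must end at a distinct leaf of its query decision tree on each such instance, and since each $\alpha_m$-query has a binary answer a depth-$q$ tree has at most $2^q$ leaves; hence $q\ge(n-k)\log_2 k$, which is $\Omega(n\log k)$ provided $k\le n/2$ (Yao's minimax principle on the uniform distribution over these instances extends the bound to randomized algorithms). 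None of the steps is technically hard; the only point needing care is this last one, where one should verify that the $k^{n-k}$ instances are genuinely distinguishable only through the oracle and note that the restriction ``$k$ bounded away from $n$'' is intrinsic — for $k$ very close to $n$ the partition is almost trivial and no bound of order $n\log k$ can hold, so the matching upper and lower bounds pin down the complexity precisely in the regime where the problem is nondegenerate.
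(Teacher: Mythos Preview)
Your proof is correct and follows essentially the same three-part structure as the paper: an $O(n)$ counting algorithm, an $O(n\log k)$ binary-search learner, and an information-theoretic lower bound. The binary-search learner is identical to the paper's.

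The one place you diverge slightly is Part~1. The paper counts components by \emph{elimination}: it starts with $S=V(G)$ and, while some $v\in S$ has $\alpha_m(v,S\setminus\{v\})=1$, deletes $v$; the surviving set has one vertex per component. You instead \emph{build up} a representative set $C$ in one forward pass, spending exactly one query per non-initial vertex. Both are $O(n)$ and equally simple; your version has the minor advantage that it makes exactly $n-1$ queries and reuses the same scaffolding as the full learner. For the lower bound, the paper merely asserts that the number of $k$-partitions of $[n]$ is $\Omega(k^n)$; your anchored family of $k^{n-k}$ instances is a cleaner way to make the same information-theoretic point, and your explicit caveat about the degenerate regime $k$ close to $n$ is a nuance the paper glosses over.
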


\subsubsection{Vertex-neighborhood detection queries}
We modify $\alpha_m$ into a query that is more applicable in practical scenarios, which we call the vertex-neighborhood detection query $\beta$ given by:
\[
\beta(u,S)=\left\{\begin{array}{rl}
	1, & \mbox{if }u\mbox{ and }v\mbox{ are adjacent for some }v\in S,\\
	0, & \mbox{otherwise}.
\end{array}
\right.
\]

$\beta$ is useful in the setting of biochemistry where it is easy to detect a reaction between a fixed reagent and a set of other chemicals, and could be applicable to genome sequencing using polymerase chain reaction (PCR) tests~\cite{bouvel2005combinatorial,chang2011reconstruction}.

We analyze the problems of graph learning and graph verification using $\beta$-queries, and prove tight bounds for these problems (up to logarithmic factors):
\begin{thm}
	\label{thm:betaGraphLearning}
	Learning or verifying an $n$-vertex hidden graph on $m$ edges requires $\Omega(m)$ $\beta$-queries. Conversely, learning such a graph can be done in $O(m\log n)$ queries, whereas verifying can be done in $O(m+n)$ queries.
\end{thm}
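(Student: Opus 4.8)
The statement packages four results: $\Omega(m)$-query lower bounds for learning and for verifying, an $O(m\log n)$-query learning algorithm, and an $O(m+n)$-query verification algorithm. For the learning lower bound the plan is purely information-theoretic: there are $\binom{\binom n2}{m}$ graphs on $n$ vertices with exactly $m$ edges, each $\beta$-query returns a single bit, so any adaptive learner needs at least $\log_2\binom{\binom n2}{m}\ge m$ queries whenever $m\le\binom n2/2$. For the verification lower bound, where only one output bit is required and a counting argument no longer suffices, I would use an adversary built from many vertex-disjoint constant-size gadgets. Concretely, take $\widehat G$ to be an $m$-edge graph that decomposes into $\Omega(m)$ vertex-disjoint pieces, each admitting a ``rewiring'' on the same vertex set that preserves the number of internal edges --- for instance pairs of matching edges $x_iy_i,x_jy_j$ that can be swapped to $x_iy_j,x_jy_i$, or vertex-disjoint $4$-cycles. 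The crucial point is that the answer to a query $\beta(u,S)$ depends only on the gadget containing $u$ (a vertex lies in at most one gadget, and anything outside the gadgets is isolated), so a single query rules out at most one rewiring; an adversary answering consistently with $\widehat G$ then leaves some rewiring undetected after fewer than $\Omega(m)$ queries, and the verifier cannot decide whether $G=\widehat G$. The same gadget graph, read as a hidden $G$ rather than as $\widehat G$, gives an alternative proof of the learning lower bound in the sparse regime, since each gadget is an independent sub-instance needing at least one query.

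For the learning upper bound, I would reconstruct $N_G(u)$ for each vertex $u$ by adaptive group testing with $\beta(u,\cdot)$ queries: begin with $S=V\setminus\{u\}$, query $\beta(u,S)$, and whenever the answer is $1$ split $S$ into two halves and recurse on both, stopping a branch as soon as it returns $0$ or has been narrowed to a single vertex. This isolates all $d_u$ neighbours of $u$ in $O(d_u\log n)$ queries, plus one query when $u$ is isolated, for a total of $\sum_u O(d_u\log n+1)=O(m\log n+n)$; the additive $O(n)$ is absorbed into $O(m\log n)$ (e.g.\ when $G$ has no isolated vertices, so $n\le 2m$).

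For the verification upper bound, I would use the promise $|E(G)|=|E(\widehat G)|=m$ to reduce verification to checking a single containment. For each vertex $u$ with $S_u:=V\setminus(\{u\}\cup N_{\widehat G}(u))$ nonempty, query $\beta(u,S_u)$: if any answer is $1$ then $G$ contains an edge absent from $\widehat G$, so $G\ne\widehat G$; if every answer is $0$ then $E(G)\subseteq E(\widehat G)$, and equality of sizes forces $E(G)=E(\widehat G)$. This uses at most $n$ queries; alternatively one may query $\beta(u,\{v\})$ for every edge $\{u,v\}$ of $\widehat G$ and accept iff all answers are $1$, using $m$ queries. Either way the cost is $O(m+n)$.

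The step I expect to be the main obstacle is making the lower bound uniform in $m$: both the counting bound and the gadget adversary are strongest when $m$ lies well below $\binom n2$, and the $\Omega(m)$ bound genuinely fails for $m$ extremely close to $\binom n2$ (a graph with $\binom n2$ edges is determined without any query), so some care is needed to state and establish the lower bound precisely in the intended range. A secondary, routine point is the exact query accounting in the binary-splitting search, so that exactly one logarithmic factor --- and nothing worse --- is lost.
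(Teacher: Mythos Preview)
Your proposal is correct and, for the upper bounds, essentially matches the paper: the paper's \texttt{findNeighbors} is exactly your adaptive binary-splitting group test, analysed to give $O(\deg(v)\log n)$ queries per vertex; for verification the paper checks every edge of $\widehat G$ with $m$ singleton queries \emph{and} every non-neighbourhood with $n$ complement queries, while you observe that the promise $|E(G)|=|E(\widehat G)|$ lets you get away with just one of the two passes. Both yield $O(m+n)$.

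Where you genuinely differ is the lower bound. The paper dispatches it in one sentence --- ``$\beta$ can only detect one edge at a time, hence both learning and verifying trivially require $\Omega(m)$ queries'' --- with no further justification. Your treatment is more careful: an information-theoretic count for learning, and a vertex-disjoint-gadget adversary for verification (where, as you note, a single output bit means counting alone cannot work). You also correctly flag that the $\Omega(m)$ statement needs $m$ bounded away from $\binom n2$, a caveat the paper does not mention. So your lower-bound argument is more rigorous than the paper's, at the cost of being longer; the paper's one-liner is really a heuristic rather than a proof.
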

\begin{rmk}
	As an immediate consequence to Theorem~\ref{thm:betaGraphLearning}, note that all sparse families of graph which satisfy $m=O(n)$, such as trees, planar graphs and minor-free graphs, can be both learned and verified using $\widetilde\Theta(n)$ queries to $\beta$.
\end{rmk}

This paper is organized as follows. In Section~\ref{sec:graphPartitions}, we prove Theorems~\ref{thm:membershipLowerBd}, \ref{thm:membershipLowerBd-k-unknown} and \ref{thm:alpha_mLearningComponents}. Section~\ref{sec:betaQuery} presents our results on the $\beta$ oracle, and proves Theorem~\ref{thm:betaGraphLearning}.
Finally, we make some concluding remarks in Section~\ref{sec:conclusion}.
\section{Graph partitions and the membership query}
\label{sec:graphPartitions}
In this section, we consider the membership query given by $\alpha(u,v)=1$ iff $u$ and $v$ belong to the same connected component. Our goal in this section is to prove Theorem~\ref{thm:membershipLowerBd}.

\subsection{Preliminaries}

We briefly state some preliminary results before diving into the proof.
\subsubsection{Upper bound on query complexity of Algorithm~\ref{algo:reyzinSrivastavaMembership}}
First, we give a quick demonstration of the upper bound of $(k-1)n-\binom k2$ of the query complexity of Algorithm~\ref{algo:reyzinSrivastavaMembership}.
\begin{prop}
	\label{prop:membershipAlgoUpperBd}
	Algorithm~\ref{algo:reyzinSrivastavaMembership} uses at most $(k-1)n-\binom k2$ queries to learn the graph $G$ consisting of isolated vertices $\{1,\ldots, k-1\}$ and a clique $K_{\{k,\ldots, n\}}$. If $k$ is not known to the algorithm, it requires $(n-k)$ additional queries.
\end{prop}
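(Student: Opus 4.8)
The plan is to group the $n$ vertices according to the order in which Algorithm~\ref{algo:reyzinSrivastavaMembership} examines them, and to bound the queries spent on each group. Observe that $|C|$ always equals the number of components discovered so far: it is $1$ right after initialization and grows by exactly $1$ each time a new representative is appended. Of the $n$ vertices, the seed $v_0$ costs nothing, exactly $k-1$ of them are added to $C$ as new representatives, and the remaining $n-k$ are \emph{old}, meaning the component containing such a vertex is already represented in $C$ when it is examined.

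First I would note that the $j$-th vertex appended to $C$ (for $j=1,\dots,k-1$) is examined while $|C|=j$; since its component is new, all $j$ of the queries $\alpha(x,v)$, $x\in C$, return $0$, so it costs exactly $j$ queries. Hence the new-representative vertices contribute $\sum_{j=1}^{k-1}j=\binom k2$ queries, and --- crucially --- this total is the same for every run of the algorithm, regardless of how the examinations are interleaved. Second, an old vertex $v$ examined while $|C|=j$ lies in one of the $j$ known components, so some query returns $1$ and its examination halts after at most $j$ queries; in particular at most $k-1$ queries when $j<k$. When $j=k$ all $k$ components are already known, so an algorithm that knows $k$ need only query $k-1$ of the representatives: if one answers $1$ we stop, and otherwise $v$ is forced into the unique remaining component. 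Thus every old vertex costs at most $k-1$ queries, and the grand total is at most $\binom k2+(k-1)(n-k)=(k-1)n-\binom k2$.

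To see this is attained on $G$, I would take $v_0$ to be an isolated vertex, then examine the remaining $k-2$ isolated vertices followed by a single clique vertex --- these $k-1$ examinations are exactly the new-representative ones and cost $1+2+\dots+(k-1)=\binom k2$ --- and finally examine the other $n-k$ clique vertices, each time consulting the $k-1$ isolated-vertex representatives first. With $k$ known the examination stops after those $k-1$ answers of $0$, so each such vertex costs $k-1$, and the total is exactly $(k-1)n-\binom k2$. If $k$ is not known, the only change is that an old vertex examined while $|C|=k$ can no longer be cut off after $k-1$ queries --- in the worst case the matching representative is consulted last, costing one extra query. Since there are $n-k$ old vertices this adds at most $n-k$ queries overall, giving $\binom k2+k(n-k)=kn-\binom{k+1}2=((k-1)n-\binom k2)+(n-k)$, attained by the same run with the clique representative queried after the $k-1$ isolated ones.

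The argument is essentially a careful accounting, so I expect the only real difficulty to be the bookkeeping: checking that the $\binom k2$ contribution of the new-representative vertices is run-independent, that an old vertex truly costs at most $k-1$ (respectively $k$) queries, and that precisely the $n-k$ old vertices --- and no others --- account for the difference between the known-$k$ and unknown-$k$ bounds.
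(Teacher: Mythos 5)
Your proposal is correct and follows essentially the same per-vertex query accounting as the paper, just grouping vertices as ``seed / new representatives / old vertices'' instead of ``isolated vertices / clique vertices'' (both groupings yield $\binom k2 + (k-1)(n-k)$). If anything, your version is slightly more careful than the paper's, since it establishes the bound for every examination order rather than arguing WLOG that the isolated vertices are classified first.
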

\begin{proof}
	Suppose $k$ is known. Without loss of generality assume that the algorithm determines the components of the $k-1$ isolated vertices using $1+\cdots+(k-2)=\frac 12(k-2)(k-1)$ many membership queries first.
	Otherwise, if the large component is discovered earlier, our algorithm can learn $G$ using fewer queries.
	To classify the remaining $(n-k+1)$ vertices into the $k$'th component, we only need to make sure that each of them does not share a component with $\{1,\ldots, k-1\}$.
	This requires $(k-1)(n-k+1)$ membership queries.
	Hence, the total number of $\alpha$-queries required in this case equals
	\[\frac12(k-1)(k-2) + (k-1)(n-k+1) = (k-1)\left(n-\frac k2\right) = (k-1)n -\binom{k}{2},\]
	as desired.
	
	When $k$ is not known, the last $(n-k+1)$ vertices each require not only $(k-1)$ queries each, but the vertices $\{k+1,\ldots, n\}$ each need one additional query.
	The total number of queries made therefore, is $(n-k)$ more, proving the second part of our claim.
\end{proof}

\subsubsection{Uniquely $k$-colorable graphs}
Our proof will be closely related to the following definition in graph theory.

Given a graph $G$, we say that it is $k$-colorable if there is a labelling $\chi:V(G)\to \{1,\ldots, k\}$ such that the two endpoints of any edge receives distinct labels.
Such a $\chi$ is also called a proper $k$-coloring.
$G$ is said to be \emph{uniquely} $k$-colorable if it has a unique proper $k$-coloring $\chi$ (up to permutations of $\{1,\ldots, k\}$).
The following theorem gives a lower bound for the number of edges in a uniquely $k$-colorable graph on $n$ vertices. 

\begin{thm}[\cite{truszczynski-UniqueColorable1984,shaoji-UniqueColorable1990}]
	\label{thm:unique-kCol}
	{
		A uniquely $k$-colorable graph on $n$ vertices must have at least $(k-1)n - \binom k2$ edges.
	}
\end{thm}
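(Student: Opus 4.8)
The plan is to use the (essentially unique) partition of $V(G)$ into color classes and count edges one pair of classes at a time. Write $V(G)=V_1\cup\cdots\cup V_k$ for the color classes of the unique proper $k$-coloring $\chi$; since $G$ is uniquely $k$-colorable and $n\ge k$, a short pigeonhole/recoloring argument shows every $V_i$ is nonempty. For $i\ne j$ let $G_{ij}$ be the bipartite graph on vertex set $V_i\cup V_j$ consisting of all edges of $G$ with one endpoint in $V_i$ and one in $V_j$. The edge sets $E(G_{ij})$ partition $E(G)$, so it suffices to lower bound each $|E(G_{ij})|$.

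The key step is the structural claim that \emph{$G_{ij}$ is connected} (as a graph on the full vertex set $V_i\cup V_j$, so in particular it has no isolated vertices). I would prove this by a Kempe-chain style swapping argument: if $G_{ij}$ had a connected component $C\subsetneq V_i\cup V_j$, recolor precisely the vertices of $C$ by interchanging colors $i$ and $j$. This is still a proper coloring — there are no edges inside $V_i$ or inside $V_j$, edges touching a third class $V_\ell$ are unaffected since only colors $i,j$ move, and every edge of $G_{ij}$ lies either entirely in $C$ (still properly colored after the interchange) or entirely outside $C$ (untouched). But the new coloring is not a permutation of $\chi$: picking a leftover vertex $u\in (V_i\cup V_j)\setminus C$ forces the permutation to fix the color of $u$, while some vertex of $C$ in the same original class has had its color changed; the only degenerate case, $|V_j|=1$ with $C=V_j$, produces a proper $(k-1)$-coloring, contradicting $\chi(G)=k$. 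Either way we contradict unique $k$-colorability, so $G_{ij}$ is connected.

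Given the claim, a connected graph on $|V_i|+|V_j|$ vertices has at least $|V_i|+|V_j|-1$ edges, hence $|E(G_{ij})|\ge |V_i|+|V_j|-1$. Summing over all $\binom k2$ pairs and using that each $|V_i|$ occurs in exactly $k-1$ of them,
\[
|E(G)|=\sum_{i<j}|E(G_{ij})|\ \ge\ \sum_{i<j}\bigl(|V_i|+|V_j|-1\bigr)=(k-1)\sum_{i=1}^{k}|V_i|-\binom k2=(k-1)n-\binom k2,
\]
which is exactly the asserted bound.

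I expect the main obstacle to be the connectivity claim, and within it the care needed to verify two points simultaneously: that interchanging colors $i,j$ on a single component $C$ keeps the coloring proper, and that the resulting coloring genuinely fails to be a permutation of $\chi$ (including handling the small-class degenerate case above). Everything after the claim — the connected-graph edge bound and the double counting of $\sum_{i<j}(|V_i|+|V_j|)$ — is routine.
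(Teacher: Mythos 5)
Your proof is correct and follows essentially the same route as the paper's own sketch: decompose the edges by pairs of color classes, show each bipartite graph $G[V_i\cup V_j]$ is connected via a Kempe-chain recoloring argument, and sum the bounds $|V_i|+|V_j|-1$ over all pairs. The extra care you take with the degenerate cases (nonempty classes, the $|V_j|=1$ case) only makes the argument more complete than the paper's outline.
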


For the sake of completeness, we give an outline of the proof. We urge the reader to refer to the cited sources for further information on uniquely $k$-colorable graphs.
\medskip

\begin{proof}(Sketch).
	Suppose $G$ is a uniquely $k$-colorable graph on $n$ vertices, and its color classes are $I_1,\ldots,I_k$.
	Each $I_i$ is an independent set, and the edges of $G$ go across these color classes.
	Let us fix any pair $i\neq j$, and consider the bipartite graph $G[I_i\cup I_j]$.
	If $G[I_i\cup I_j]$ was disconnected, we could switch around the colors of these connected components to obtain a new proper $k$-coloring of $G$, a contradiction.
	Thus, $G[I_i\cup I_j]$ is a connected bipartite graph, implying there are at least $|I_i|+|I_j|-1$ edges between components $I_i$ and $I_j$.
	It can then be seen that
	\[
	\begin{aligned}
		|E(G)|&\ge \sum_{i<j} |I_i|+|I_j|-1 \\&= (k-1)(|I_1|+\cdots+|I_k|) - \binom k2 \\&= (k-1)n - \binom k2.
	\end{aligned}
	\]
\end{proof}

We are ready to present the proofs of Theorems~\ref{thm:membershipLowerBd} and \ref{thm:membershipLowerBd-k-unknown} in the following two sections. 

\subsection{Membership query}
First we present our proof of Theorem~\ref{thm:membershipLowerBd}.
Recall that $\alpha(u,v)=1$ iff $u$ and $v$ belong to the same component in an $n$-vertex hidden graph with $k$ components, and we are aiming to prove a lower bound of $(k-1)n - \binom k2$ queries on learning all its components.

\subsubsection{Proof of Theorem~\ref{thm:membershipLowerBd}.}
{We will use the following definition in our proofs.}
\begin{defn}
	{
		Suppose that $G$ is a $k$-colorable graph and $i,j \in V(G)$ are distinct vertices that are not adjacent in $G$.
		Then $\{i,j\}$ is called a $k$-separable pair if there exists a
		proper $k$-coloring $\chi\colon V(G)\to [k]$ of $G$ such that $\chi(i)\neq \chi(j)$,
		and we call such a coloring $\chi$ an $\{i,j\}$-separating coloring.
		Otherwise, we call $\{i,j\}$ a $k$-inseparable pair.
	}
\end{defn}
{For example, the nonadjacent pair in $K_{k+1}^{-}$ (the graph obtained from a $(k+1)$-clique by removing one edge) is $k$-inseparable.}
The following fact is easy to observe from the definition. 

\begin{fact}
	Suppose that $G$ is a $k$-colorable graph and $i,j \in V(G)$ are distinct vertices that are not adjacent in $G$.
	Then $\{i,j\}$ is a $k$-inseparable pair if and only if the graph $G'$ obtained from $G$ by adding the edge $\{i,j\}$ has chromatic number $k+1$.
\end{fact}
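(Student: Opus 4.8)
The plan is to simply unwind the definitions, using the elementary observation that adding a single edge raises the chromatic number by at most one. Since $G'$ is obtained from $G$ by adding the single edge $\{i,j\}$, a map $\chi\colon V(G)\to[k]$ is a proper $k$-coloring of $G'$ exactly when it is a proper $k$-coloring of $G$ that in addition satisfies $\chi(i)\neq\chi(j)$. Consequently, $G'$ admits a proper $k$-coloring if and only if $G$ has an $\{i,j\}$-separating coloring, i.e.\ if and only if $\{i,j\}$ is a $k$-separable pair. Contrapositively, $G'$ has no proper $k$-coloring if and only if $\{i,j\}$ is $k$-inseparable.

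To conclude, I would show that $\chi(G')\le k+1$ always holds. Since $G$ is $k$-colorable, fix a proper $k$-coloring of $G$: if $i$ and $j$ already receive different colors, the same coloring is proper for $G'$; otherwise, recoloring $i$ with a fresh color $k+1$ yields a proper $(k+1)$-coloring of $G'$. Hence ``$G'$ is not $k$-colorable'' is equivalent to ``$\chi(G')\ge k+1$'', which, together with $\chi(G')\le k+1$, is equivalent to ``$\chi(G')=k+1$''. Chaining this with the equivalence from the previous paragraph gives exactly the stated biconditional.

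I do not expect any substantial obstacle here; the only point worth stating carefully is that ``$k$-colorable'' is used throughout to mean $\chi(G)\le k$ rather than $\chi(G)=k$, and it is precisely this convention that makes the bound $\chi(G')\le k+1$ hold without any extra hypotheses.
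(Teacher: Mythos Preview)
Your proof is correct and is exactly the kind of routine definition-unwinding the paper has in mind: the paper does not give a proof of this Fact at all, merely remarking that it ``is easy to observe from the definition.'' Your two-step argument (proper $k$-colorings of $G'$ are precisely the $\{i,j\}$-separating $k$-colorings of $G$, and $\chi(G')\le k+1$ since adding one edge raises the chromatic number by at most one) is the natural way to make this observation precise.
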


\medskip

Now we present our proof of Theorem~\ref{thm:membershipLowerBd}.
As mentioned earlier, our proof is adversarial.
We (the adversary) start with initializing an empty auxiliary (simple) graph $H$ with $|V(H)|=n$, and pick an arbitrary $k$-coloring $\chi:V(H) \to \{1,\ldots, k\}$ of $H$.
Let $G$ be the hidden graph corresponding to $H$, where each color class corresponds to a partition.

Suppose now that $\mathcal A$ makes a query $\alpha(x,y)$. We respond to $\mathcal A$ and update the graph $H$ and its $k$-coloring $\chi$ according to the following rules:
\begin{itemize}
	\item If $\chi(x)\neq \chi(y)$, we add $xy$ to $E(H)$ and reply ``no" to the algorithm.
	\item If $\chi(x)=\chi(y)$ and $\{x,y\}$ is $k$-separable in $H$,
	we add $xy$ to $E(H)$ and modify the coloring of $G$ to a $\{x,y\}$-separating $k$-coloring, and reply ``no".
	\item If $\chi(x)=\chi(y)$ and $\{x,y\}$ is $k$-inseparable,
	then we do not add $xy$ to $E(H)$, and answer ``yes".
\end{itemize}

We illustrate some intermediate steps in the evolution of $H$ and $\chi$ against a sample algorithm $\mathcal A$ in Figure~\ref{fig:coloringUpdates}.
\begin{figure}[ht]
	\resizebox{\textwidth}{!}{
		\begin{tikzpicture}
			\tikzstyle{redvertex}=[align=center, inner sep=0pt, text centered, circle,black,fill=red!10,draw=red,text width=1.3em,very thick]
			\tikzstyle{greenvertex}=[align=center, inner sep=0pt, text centered, circle,black,fill=green!10,draw=green,text width=1.3em,very thick]
			\tikzstyle{bluevertex}=[align=center, inner sep=0pt, text centered, circle,black,fill=blue!10,draw=blue,text width=1.3em,very thick]
			\begin{scope}
				\node [bluevertex] (x4) at (0:2){$4$};
				\node [redvertex] (x3) at (60:2){$3$};
				\node [redvertex] (x2) at (120:2){$2$};
				\node [redvertex] (x1) at (180:2){$1$};
				\node [greenvertex] (x6) at (240:2){$6$};
				\node [bluevertex] (x5) at (300:2){$5$};
				\draw (x3)--(x4)--(x2)--(x6)--(x2)--(x5);
				\draw (x1)--(x6)--(x3);
				\draw (270:3) node {$G:\ \{1,2,3\}, \{4,5\}, \{6\}$};
			\end{scope}
			
			\draw (3.5,0) node [above] {$ \alpha(1,5)$};
			\draw [->] (2.7,0)--(4.3,0);
			
			\begin{scope}[shift={(7,0)}]
				\node [bluevertex] (x4) at (0:2){$4$};
				\node [redvertex] (x3) at (60:2){$3$};
				\node [redvertex] (x2) at (120:2){$2$};
				\node [redvertex] (x1) at (180:2){$1$};
				\node [greenvertex] (x6) at (240:2){$6$};
				\node [bluevertex] (x5) at (300:2){$5$};
				\draw (x3)--(x4)--(x2)--(x6)--(x2)--(x5);
				\draw (x1)--(x6)--(x3);
				\draw (x1)--(x5);
				\draw (270:3) node {$G:\ \{1,2,3\}, \{4,5\}, \{6\}$};
			\end{scope}
			
			\draw (10.5,0) node [above] {$ \alpha(2,3)$};
			\draw [->] (9.7,0)--(11.3,0);
			
			\begin{scope}[shift={(14,0)}]
				\node [bluevertex] (x4) at (0:2){$4$};
				\node [redvertex] (x3) at (60:2){$3$};
				\node [redvertex] (x2) at (120:2){$2$};
				\node [redvertex] (x1) at (180:2){$1$};
				\node [greenvertex] (x6) at (240:2){$6$};
				\node [bluevertex] (x5) at (300:2){$5$};
				\draw (x3)--(x4)--(x2)--(x6)--(x2)--(x5);
				\draw (x5)--(x1)--(x6)--(x3);
				\draw [dashed] (x2)--(x3) node [above, midway] {\footnotesize separable};
				\draw (270:3) node {$G:\ \{1,2,3\}, \{4,5\}, \{6\}$};
			\end{scope}
			\draw [->] (14,-3.8)--(14,-4.3);
			
			\begin{scope}[shift={(14,-7)}]
				\node [greenvertex] (x4) at (0:2){$4$};
				\node [bluevertex] (x3) at (60:2){$3$};
				\node [redvertex] (x2) at (120:2){$2$};
				\node [redvertex] (x1) at (180:2){$1$};
				\node [greenvertex] (x6) at (240:2){$6$};
				\node [bluevertex] (x5) at (300:2){$5$};
				\draw (x3)--(x4)--(x2)--(x6)--(x2)--(x5);
				\draw (x5)--(x1)--(x6)--(x3);
				\draw (x2)--(x3);
				\draw (270:3) node {$G:\ \{1,2\}, \{3,5\}, \{4,6\}$};
			\end{scope}
			
			\draw (10.5,-7) node [above] {$ \alpha(1,3)$};
			\draw [->] (11.3,-7)--(9.7,-7);
			
			\begin{scope}[shift={(7,-7)}]
				\node [greenvertex] (x4) at (0:2){$4$};
				\node [bluevertex] (x3) at (60:2){$3$};
				\node [redvertex] (x2) at (120:2){$2$};
				\node [redvertex] (x1) at (180:2){$1$};
				\node [greenvertex] (x6) at (240:2){$6$};
				\node [bluevertex] (x5) at (300:2){$5$};
				\draw (x3)--(x4)--(x2)--(x6)--(x2)--(x5);
				\draw (x5)--(x1)--(x6)--(x3);
				\draw (x2)--(x3)--(x1);
				\draw (270:3) node {$G:\ \{1,2\}, \{3,5\}, \{4,6\}$};
			\end{scope}
			
			\draw (3.5,-7) node [above] {$\alpha(1,2)$};
			\draw [->] (4.3,-7)--(2.7,-7);
			
			\begin{scope}[shift={(0,-7)}]
				\node [greenvertex] (x4) at (0:2){$4$};
				\node [bluevertex] (x3) at (60:2){$3$};
				\node [redvertex] (x2) at (120:2){$2$};
				\node [redvertex] (x1) at (180:2){$1$};
				\node [greenvertex] (x6) at (240:2){$6$};
				\node [bluevertex] (x5) at (300:2){$5$};
				\draw (x3)--(x4)--(x2)--(x6)--(x2)--(x5);
				\draw (x1)--(x6)--(x3);
				\draw (x5)--(x1)--(x3);
				\draw (x2)--(x3);
				\draw [dashed] (x1)--(x2) node [midway, above, rotate=60] {\footnotesize inseparable};
				\draw (270:3) node {$G:\ \{1,2,3\}, \{4,5\}, \{6\}$};
			\end{scope}
		\end{tikzpicture} 
	}
	\caption{A sample set of queries and the corresponding evolution of $H$ and the partition of $G$. Here $n=6$ and $k=3$. Our adversary responds ``yes" only to $\alpha(1,2)$, and ``no" to all other $\alpha$-queries made by $\mathcal A$ in this example.}
	\label{fig:coloringUpdates}
\end{figure}
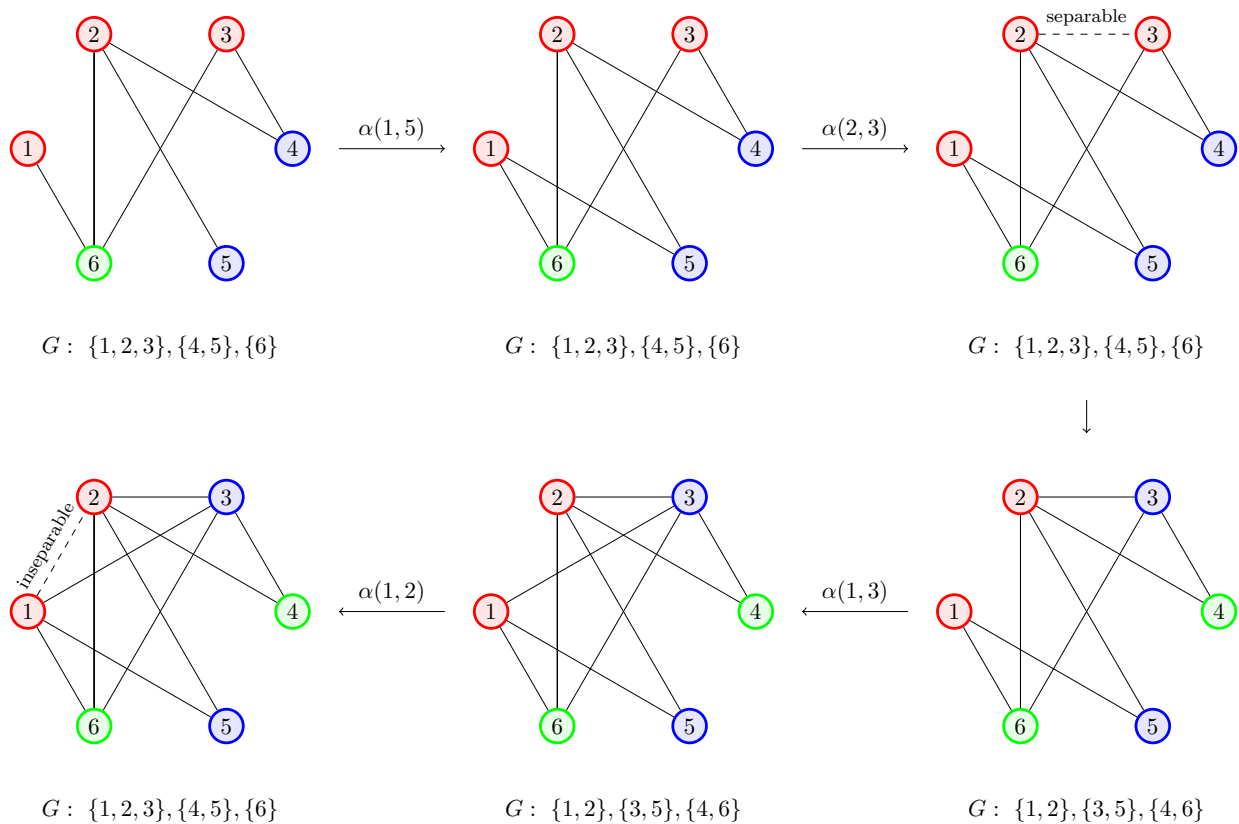

Observe that adding new edges into $H$ does not change the inseparability of a pair $\{x,y\}$.
Therefore, all $k$-inseparable pairs $\{x,y\}$ of $H$ remain $k$-inseparable in the evolution of the algorithm.
Further, by construction, every edge of $H$ corresponds to a non-coincidence in $G$'s components.
Vertex pairs $\{x,y\}$ of $H$ that have $\alpha(x,y)=1$ are $k$-inseparable, and hence belong to the same color class for any proper $k$-coloring of $H$.
Further, $\chi$ is always a proper $k$-coloring of $H$, and hence induces a partition of $G$ into $k$ components.

Now, suppose that $\mathcal A$ learns a unique partition of $G$ into $k$ components.
This would mean that the coloring $\chi$ of the graph $H$ must be a unique $k$-coloring.
By Theorem~\ref{thm:unique-kCol}, $H$ has at least $(k-1)n - \binom k2$ edges, as desired.
\hfill{$\Box$}

\begin{rmk}
	We note that our adversary checks over all possible $k$-colorings of $H$, and hence does not have a $\poly(n,k)$ runtime.
	However, a slight modification to the argument above (given in Appendix~\ref{app:degreeLowerBd}) is able to prove the existence of a $\poly(n,k)$-time adversary that forces at least $\frac12(n-k)(k-1)$ queries.
	It might be interesting to analyze the effect of limiting resources available to the adversary to prove different versions of this theorem.
\end{rmk}

\subsubsection{Proof of Theorem~\ref{thm:membershipLowerBd-k-unknown}}
Now we prove Theorem~\ref{thm:membershipLowerBd-k-unknown}.
Our proof is basically a modification of the proof of Theorem~\ref{thm:membershipLowerBd}: instead of a single auxiliary graph $H$ with $|V(H)|=n$, now we will maintain two graphs $H_1$ and $H_2$ on the same vertex set $V$ of size $n$.
We start with the coloring $\chi\colon V\to \{1\}$, i.e. all vertices of $V$ receive the color $1$.\\
Suppose an algorithm $\mathcal A$ makes a query $\alpha(x,y)$. We then respond according to the rules below:
\begin{itemize}
	\item If $\chi(x)\neq \chi(y)$, add $xy$ to $E(H_1)$ and reply ``no" to the algorithm.
	\item If $\chi(x)=\chi(y)$ and $\{x,y\}$ is $k$-separable in $H_1$,
	add $xy$ to $E(H_1)$ and modify the coloring of $H_1$ to a $\{x,y\}$-separating coloring, and reply ``no".
	\item If $\chi(x)=\chi(y)$ and $\{x,y\}$ is $k$-inseparable in $H_1$,
	add $xy$ to $E(H_2)$ and answer ``yes".
\end{itemize}
Note that similar to before, adding new edges into $H_1$ does not change the inseparability of a pair $\{x,y\}$. Hence, if $\mathcal A$ learns the hidden graph $G$ when the algorithm stops, the graph $H_1$ must be a uniquely $k$-colorable graph.
This implies
\[
|E(H_1)|\ge (k-1)n-\binom k2.
\]

Additionally, we claim the following about $H_2$.
\begin{clm}
	When $\mathcal A$ stops, the graph $H_2$ must have at most $k$ connected components.
\end{clm}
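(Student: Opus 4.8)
The plan is to argue by contradiction: assuming $H_2$ has at least $k+1$ connected components at the moment $\mathcal A$ stops, I will exhibit two \emph{distinct} partitions of $V$ into connected components, each consistent with every answer the adversary has given so far, so that $\mathcal A$ cannot actually have determined the partition of $G$. One partition will be the color classes of the current coloring $\chi$; the other will be obtained from it by peeling off a single component of $H_2$ into a brand-new color class.

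First I would record two structural facts that hold throughout the game. Since $\chi$ is always a proper $k$-coloring of $H_1$, every edge of $H_1$ joins two vertices of \emph{different} $\chi$-colors. On the other hand, an edge is placed in $H_2$ only between a pair $\{x,y\}$ that is $k$-inseparable in $H_1$ at that time; since adding edges to $H_1$ never destroys inseparability, $\{x,y\}$ is still $k$-inseparable in the final $H_1$, so its endpoints receive equal colors in \emph{every} proper $k$-coloring of $H_1$, in particular under $\chi$. Hence every connected component of $H_2$ is monochromatic under $\chi$, and therefore lies inside one of the at most $k$ color classes $I_s = \chi^{-1}(s)$.

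Now suppose $H_2$ has components $C_1,\dots,C_r$ with $r\ge k+1$. Each $C_\ell$ lies in a single color class and there are at most $k$ of them, so by pigeonhole two components $C_i,C_j$ with $i\neq j$ share a color, say $t$; fix $u\in C_i$ and $v\in C_j$, which are distinct and in different $H_2$-components. Let $P=\{I_s : I_s\neq\varnothing\}$ and let $P'$ be obtained from $P$ by replacing $I_t$ with the two nonempty parts $I_t\setminus C_j\ (\ni u)$ and $C_j\ (\ni v)$; then $P\neq P'$, and both are genuine component partitions (realize each by a disjoint union of cliques on its parts). It remains to check $P'$ is consistent with all answers. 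Every ``yes'' was returned for an $H_2$-edge, whose endpoints lie in a common $C_\ell$ and hence in a common part of $P'$, because the only parts that were altered are inside $I_t$ and the $C_\ell$ are pairwise disjoint (so each $C_\ell$ lands entirely in $C_j$, or entirely in $I_t\setminus C_j$, or is untouched). Every ``no'' was returned for an $H_1$-edge, whose endpoints have distinct $\chi$-colors, so at most one endpoint lies in $I_t$; since the split is internal to $I_t$, the two endpoints still lie in distinct parts of $P'$. The same checks for $P$ are immediate. This contradicts $\mathcal A$ having learned $G$, so $r\le k$, proving the claim.

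The step I expect to need the most care is the consistency check of $P'$ against the ``no'' answers, i.e. guaranteeing that peeling off $C_j$ never pushes the two endpoints of some $H_1$-edge into the same part; this is exactly where one uses that $\chi$ is a \emph{proper} coloring of $H_1$, combined with the observation that the modification is confined to the single class $I_t$. (Downstream, once the claim holds one only notes that a graph on $n$ vertices with at most $k$ components has at least $n-k$ edges, so the total number of queries is $|E(H_1)|+|E(H_2)|\ge \bigl((k-1)n-\binom{k}{2}\bigr)+(n-k)=kn-\binom{k+1}{2}$.)
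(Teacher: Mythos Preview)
Your proof is correct and follows essentially the same approach as the paper: both argue by contradiction, observe that every $H_2$-component is monochromatic under $\chi$ (hence the $H_2$-partition refines the color-class partition), and then exhibit a second partition consistent with all answers. The only cosmetic difference is that the paper uses the full $H_2$-component partition $\{C_1,\dots,C_m\}$ as the alternative, while you peel off just one $C_j$ to get a $(k{+}1)$-partition; your consistency checks are spelled out in more detail than the paper's, but the underlying idea is identical.
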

\begin{proof}
	Let us assume that the unique $k$-partition corresponding to $H_1$ is $\mathcal{P}_k = \{V_1,\ldots, V_k\}$.
	Suppose on the contrary that $H_2$ has $m$ connected components and $m\ge k+1$, say with vertex sets $C_1, \ldots, C_m$.
	As every edge of $H_2$ corresponds to an inseparable pair, for every $i\in [m]$, there exists $j\in [k]$ such that $C_i \subset V_j$.
	In other words, the $m$-partition $\mathcal{P}_m = \{C_1,\ldots, C_m\}$ is a refinement of $P_{k}$.
	The contradiction is that $\mathcal A$ cannot distinguish
	the $m$-partition $\mathcal{P}_m$ from the $k$-partition $\mathcal{P}_k$.
	Therefore, $H_2$ has at most $k$ connected components.
\end{proof}

As an immediate corollary, we obtain $|E(H_2)| \ge n-k$.
Therefore, the algorithm requires at least 
\[|E(H_1)|+|E(H_2)| \ge (k-1)n-\binom{k}{2}+n-k = kn-\binom{k+1}2\]
many queries, completing the proof of Theorem~\ref{thm:membershipLowerBd-k-unknown}.


\medskip

\subsection{Multiple-membership query}
Now we turn our attention to the problem of learning both $k$ and individual components using multiple-membership queries.

\subsubsection{Proof of Theorem~\ref{thm:alpha_mLearningComponents}}
Recall that $\alpha_m(u,S)=1$ if and only if there is a $v\in S$ such that $u$ and $v$ belong to the same component. There are three assertions in Theorem~\ref{thm:alpha_mLearningComponents}, and we prove each of them below.
\begin{itemize}
	\item {\bf Part 1. Learning $k$:} First, we demonstrate an algorithm that learns the number of components of an $n$-vertex hidden graph $G$ using $O(n)$ queries to $\alpha_m$.
	Start with $S=V(G)$.
	While there is a vertex $v\in S$ such that $\alpha_m(v,S\setminus\{v\})=1$, we delete $v$ from $S$.
	When this algorithm terminates, we end up with an independent set $S$ whose each vertex lies in a different component.
	Conversely, as every other vertex $v\in V(G)$ lies in the same component as some vertex in $G$, $|S|$ will equal the number of components of $G$.
	Therefore, the above algorithm learns $k$ using $O(n)$ many $\alpha_m$-queries. \hfill{$\blacksquare$}
	
	\item {\bf Part 2. Lower bound on learning components:} Next, we claim that learning all components of $G$ requires at least $\Omega(n\log k)$ queries.
	This proof is information-theoretic.
	The total number of partitions of $n$ vertices into $k$ parts is $\Omega(k^n)$, and hence any algorithm making $\alpha_m$-queries must have depth $\Omega(n\log k)$. \hfill{$\blacksquare$}
	
	\item {\bf Part 3. Upper bound on learning components:} Note that the function \ref{func:betacomponentLearning} learns the components of the hidden graph $G$ using multiple-membership queries.
	\begin{function}[ht]
		\SetKwInOut{Input}{Input}\SetKwInOut{Output}{Output}
		\Input{Vertex set $V(G)=\{v_1,\ldots,v_n\}$, Oracle $\alpha_m$.}
		\Output{Partition $V(G)=C_1\sqcup\cdots\sqcup C_k$ such that each $C_i$ is a connected component.}
		\caption{learnComponentsMMQ()\label{func:betacomponentLearning}}
		\BlankLine
		Initialize $C_1=\{v_1\}$, $k=1$\;
		\For{$i=2$ to $n$}{
			\If{$\alpha_m(v_i, C_1\cup\cdots \cup C_k)=0$}{
				k += 1 \;
				Add $v_i$ to $C_k$\;
			}
			\Else{
				Find $j$ such that $\alpha_m(v_i, C_j)=1$ via binary search among $\{C_1,\ldots, C_k\}$\label{step:binarySearchComp}\;
				Add $v_i$ to its corresponding $C_j$\;
			}
		}
		\Return{$\{C_1,\ldots, C_k\}$}
	\end{function}
	
	The binary search in the \textbf{else} statement works because $\alpha_m$ can query $v_i$ with a union of several $C_j$'s at once. This step requires $O(\log k)$ queries, and therefore our algorithm has query complexity of $O(n\log k)$.\hfill{$\blacksquare$}
\end{itemize}

This completes the proof of Theorem~\ref{thm:alpha_mLearningComponents}, and hence $\alpha_m$ is an oracle that can learn the number of components $k$ in a hidden graph with fewer queries than learning the components. \hfill{$\Box$}

\section{The vertex-neighborhood detection query}
\label{sec:betaQuery}

For the remainder of the paper, we consider the vertex-neighborhood detection query given by $\beta(v,S)=1$ iff there is some edge from $v$ to some vertex in $S$.
We now prove tight bounds of $\widetilde \Theta(m)$ for both learning and verifying graphs (with $m$ edges) using $\beta$-queries.

\subsection{Proof of Theorem~\ref{thm:betaGraphLearning}}
Now we move onto analyzing the problems of graph learning and verification using the oracle $\beta$.
Let us take a hidden graph $G$ on $n$ vertices and $m$ edges. It is clear that $\beta$ can only detect one edge at a time, and hence both learning and verifying a hidden graph using $\beta$ would trivially require at least $\Omega(m)$ queries.
Hence, it suffices to prove upper bounds of $O(m+n)$ for verification and $O(m\log n)$ for learning $G$.

\subsubsection{Verification using $\beta$}
For the verification problem, we have a graph $\widehat G$ with $V(\widehat G) = V(G) = V$ that is known to us. We verify each edge $uv\in E(\widehat G)$ individually by checking $\beta(u,\{v\})=1$, and this requires $m$ queries.

Next, we verify the non-edges of $\widehat G$. Fix a vertex $v$ and compute its neighborhood $N_{\widehat G}(v)=\{u\in V: uv\in E(\widehat G)\}$.
Note that if $\widehat G$ was the same as $G$, we would have
\[
\beta\Bigl(v, V\setminus ( N_{\widehat G}(v)\cup\{v\} )\Bigr)=0.
\]
Since checking all non-edges through a vertex takes a single $\beta$-query, we can verify all non-edges of $\widehat G$ using $n$ queries.
Hence, verifying all edges and non-edges of $G$ using can be done using $O(m+n)$ $\beta$-queries.\hfill{$\blacksquare$}

\subsubsection{Learning using $\beta$}
Although slightly more involved, our algorithm for learning a hidden graph $G$ is also based on divide-and-conquer.
The main step consists of devising a recursive method (which we call \ref{func:betaLearning}) that, for a fixed vertex $v\in V$, learns all neighbors (and non-neighbors) of $v$.

\begin{function}[ht]
	\SetKwInOut{Input}{Input}\SetKwInOut{Result}{Result}
	\Input{Vertex $v$, Set $S$, Hidden graph $G$, Oracle $\beta$.}
	\Result{Mark all neighbors of $v$ in $S$ blue, and non-neighbors red.}
	\caption{findNeighbors($v, S$).\label{func:betaLearning}}
	\BlankLine
	\If{$|S|=1$}{
		\If{$\beta(v,S)=1$}{
			Mark the vertex of $S$ blue and terminate\;
		}
		\Else{
			Mark the vertex of $S$ red and terminate\;
		}
	}
	Divide $S$ into two (approximately) equal parts $S_1\sqcup S_2$\;
	\If{$\beta(v, S_1)=1$}{
		findNeighbors($v, S_1$)\;
	}
	\Else{
		Mark all vertices of $S_1$ red and terminate\;
	}
	\If{$\beta(v, S_2)=1$}{
		findNeighbors($v, S_2$)\;
	}
	\Else{
		Mark all vertices of $S_2$ red and terminate\;
	}
\end{function}

Let us first analyze the query complexity of \ref{func:betaLearning}.

\begin{clm}
	\label{clm:findNeighborsQuery}
	Suppose $\ell$ is the number of neighbors of $v$ in $S$.
	Then, \ref{func:betaLearning}($v,S$) takes at most $O(\ell\log |S|)$ queries to mark all neighbors of $v$ in $S$ red and non-neighbors blue.
\end{clm}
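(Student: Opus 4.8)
The plan is to prove the bound by induction on $|S|$, tracking how the $\ell$ neighbors of $v$ are distributed across the recursive calls. The key structural observation is that \ref{func:betaLearning}$(v,S)$ makes one query on $S$ at the top, then (at most) one query on each of $S_1$ and $S_2$, and only recurses into a part $S_i$ when $\beta(v,S_i)=1$, i.e., when $S_i$ actually contains at least one neighbor of $v$. So a subtree of the recursion that receives zero neighbors is pruned immediately after the parent's single query on it: it costs nothing beyond that one query already charged to the parent.

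Concretely, I would set up the recurrence. Let $T(s,\ell)$ be the worst-case number of $\beta$-queries made by \ref{func:betaLearning}$(v,S)$ when $|S|=s$ and $v$ has exactly $\ell$ neighbors in $S$. The base case $s=1$ gives $T(1,\ell)\le 1$. For $s\ge 2$, writing $S=S_1\sqcup S_2$ with $|S_i|\le \lceil s/2\rceil$ and letting $\ell_1,\ell_2$ be the number of neighbors of $v$ in $S_1,S_2$ (so $\ell_1+\ell_2=\ell$), the call makes at most $2$ queries directly (the tests $\beta(v,S_1)$ and $\beta(v,S_2)$), plus a recursive call into $S_i$ only when $\ell_i\ge 1$. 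Thus
\[
T(s,\ell)\le 2 + [\ell_1\ge 1]\,T(|S_1|,\ell_1) + [\ell_2\ge 1]\,T(|S_2|,\ell_2).
\]
I would then prove by induction that $T(s,\ell)\le c(\ell+1)\log_2 s$ for a suitable absolute constant $c$ (say $c=4$), with a small additive adjustment to absorb the base case and the $\ell=0$ case. The $\ell=0$ case needs to be handled separately since then no recursion happens and the cost is at most $1$ (the single top-level test, or we can think of it as charged by the caller); the bound $O(\ell\log|S|)$ as literally stated is $0$ when $\ell=0$, so strictly one should phrase it as $O((\ell+1)\log|S|)$ or note that when $\ell=0$ the function is invoked only via a parent query — I would add a sentence clarifying this so the statement is accurate, or simply prove the cleaner $O((\ell+1)\log|S|)$ bound which immediately implies the claim whenever $\ell\ge 1$.

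For the inductive step, the point is that the depth of recursion along any root-to-leaf path is at most $\log_2|S|$, and at each level the total number of "active" subproblems (those with $\ell_i\ge 1$) is at most $\ell$, since the neighbor counts partition $\ell$ and each active subproblem consumes at least one neighbor. Each active subproblem contributes $O(1)$ direct queries per level it participates in. Summing, the total is $O(\ell)$ active nodes times $O(\log|S|)$ levels, i.e. $O(\ell\log|S|)$, plus $O(\log|S|)$ for the single "spine" of tests along the way when $\ell\ge1$ — all absorbed into $O((\ell+1)\log|S|)$. I would present this either as the clean induction on the displayed recurrence or as this level-counting argument, whichever is shorter; the induction is probably cleanest to write rigorously.

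The main obstacle is bookkeeping the constants and the $\ell=0$ edge case cleanly: one must be careful that when $v$ has no neighbors in $S$, the function still makes queries (it splits $S$ and tests both halves before concluding "all red"), so a naive reading of "$O(\ell\log|S|)$" is false on the nose. The fix is to either state the bound as $O((\ell+1)\log|S|)$ or observe that \ref{func:betaLearning} is always called by \texttt{findNeighbors} on a set containing at least one neighbor (so $\ell\ge1$) except possibly at the very top, and absorb that single top-level overhead into the outer analysis. Beyond that, the recurrence solves routinely; I do not anticipate any genuine difficulty, only the need to state the claim in a form that is literally correct.
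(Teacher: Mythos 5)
Your core argument is the same as the paper's: the paper colors the recursion tree, observes that the ``active'' (blue) nodes form a subtree with at most $\ell$ leaves and depth $\log_2|S|$, and charges each pruned (red) node to its blue parent --- which is exactly your level-counting argument ($\le \ell$ active subproblems per level, $O(1)$ queries each, $\log_2|S|$ levels). Your caveat about $\ell=0$ is also correct and worth stating. One warning, though: the specific inductive invariant you propose, $T(s,\ell)\le c(\ell+1)\log_2 s$, does \emph{not} close. When both halves are active the induction gives $2+c(\ell_1+1)\log_2(s/2)+c(\ell_2+1)\log_2(s/2)=2+c(\ell+2)(\log_2 s-1)$, and comparing with the target $c(\ell+1)\log_2 s$ requires $2+c\log_2 s\le c(\ell+2)$, which fails for large $s$ and small $\ell$ (e.g.\ $\ell_1=\ell_2=1$): the ``$+1$'' terms double at every branching. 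The bound itself is true, but you must prove it with a different invariant --- e.g.\ $T(s,\ell)\le c\,\ell\log_2 s+c'$ for $\ell\ge1$, which does close --- or simply use the level-counting formulation, which is the paper's proof. So if you write this up, drop the $(\ell+1)$-form induction and keep the tree-counting argument.
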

\begin{proof}(Claim~\ref{clm:findNeighborsQuery}):
	We take a look at the recursion tree $T$ for \ref{func:betaLearning}($v,S$), and color its nodes red or blue as follows.
	The root node corresponds to the computation of \ref{func:betaLearning}($v,S$), and so we keep the set $S$ in it.
	If $\beta(v,S)=1$, we color the root node blue, otherwise we color it red.
	In the next level, the node containing $S$ has two children: $S_1$ and $S_2$ corresponding to the equipartition of $S$.
	Let us color $S_1$ blue if $\beta(v,S_1)=1$, and red otherwise. Similarly, we color $S_2$, and continue this coloring scheme down the entire recursion tree.
	A sample $T$ and its coloring is depicted in Figure~\ref{fig:recursionTree}.
	
	\begin{figure}[ht]
		\centering
		\includegraphics[width=.7\textwidth]{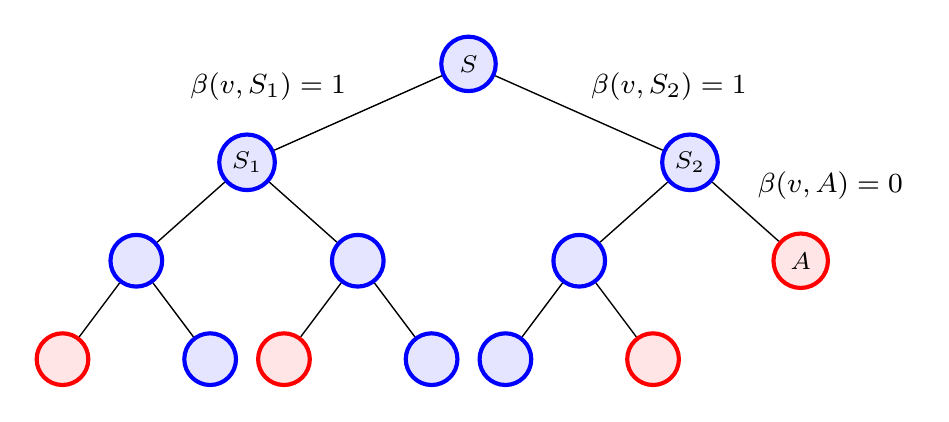}
		\caption{A sample recursion tree $T$ corresponding to \ref{func:betaLearning}($v,S$)}
		\label{fig:recursionTree}
	\end{figure}
	
	Let $T_b$ be the subtree with blue vertices. It is clear that each leaf of $T_b$ corresponds to a vertex marked blue by \ref{func:betaLearning}, implying that $T_b$ is a tree with $\ell$ leaves and depth at most $\log|S|$. Therefore,
	\[
	|V(T_b)|\le O(\ell\log |S|).
	\]
	
	Now we take a closer look at the red vertices of $T$.
	By construction, each red vertex is a leaf, i.e. the computation stops at these vertices.
	This means that the parents of red vertices are always blue.
	Further, two red nodes cannot be siblings, as if $\beta(v,A)=\beta(v,B)=0$, then $\beta(v, A\cup B)=0$, implying that the computation would have stopped at the parent of $A$ and $B$.
	Thus, every red vertex has a \emph{unique} blue parent.
	This implies that the number of red vertices in $T$ is bounded above by $|V(T_b)|$, leading to
	\[
	|V(T)|\le 2|V(T_b)| \le O(\ell \log |S|),
	\]
	completing the proof of Claim~\ref{clm:findNeighborsQuery}.
\end{proof}

Therefore, to learn $G$, we can run \ref{func:betaLearning}($v, V\setminus \{v\}$) over all vertices $v$. Observe that the total number of queries made is at most
\[
\sum_{v\in V(G)} O(\deg(v)\log n) = O(m\log n),
\]
thus proving the required upper bound. \hfill{$\blacksquare$}

\section{Conclusion and future work}
\label{sec:conclusion}
In this paper, we demonstrated a fundamental and exact lower bound on the problem of learning partitions using membership queries (which we called $\alpha$), filling a long lasting gap in the literature.
We generalized the membership oracle to take subsets of vertices as one of its inputs (which we called $\alpha_m$), and demonstrated how learning the number of components can be done using fewer $\alpha_m$-queries than the components themselves.
In the second section, we also demonstrated a powerful oracle (which we called $\beta$) that can be used to efficiently learn and verify sparse graphs.

It would be very interesting to see other oracles which can exploit structural properties of sparse graphs, such as the existence of small separators.
Graph families that admit the existence of small separators include a vast array of graphs such as planar graphs~\cite{lipton1979separator,alon1994planar}, bounded genus graphs~\cite{gilbert1984separator}, minor-free graphs~\cite{reed2009linear,wulff2011separator}, etc.
Further, it should be possible to extend these results to learning and verification of graphs with polynomially bounded expansion~\cite{nevsetvril2008grad,dvorak2016strongly}, and we leave this as a future avenue of investigation.

\section*{Acknowledgments}
The authors are immensely thankful to Mano Vikash Janardhanan for bringing the membership oracle to their attention, and for several fruitful discussions on graph learning theory.
Research of the first author was supported by ERC Advanced Grant 101020255.


\bibliographystyle{plain}
\bibliography{graphLearning.bbl}

\clearpage
\appendix
\section{Appendix: Polynomial-time adversary against Algorithm~\ref{algo:reyzinSrivastavaMembership}}
\label{app:degreeLowerBd}
Our goal in this section is to prove a slight modification of Theorem~\ref{thm:membershipLowerBd}. We prove that by sacrificing a constant factor of $\frac 12$, it is possible to demonstrate an adversary that runs in $\poly(n,k)$ time:
\begin{prop}
	Given any algorithm $\mathcal A$ that makes membership queries on a hidden graph $G$ with $n$ vertices and $k$ components, there is a $\poly(n,k)$-time adversary that can force $\mathcal A$ to make at least {$\frac12(n-k)(k-1)$} $\alpha$-queries to learn the partition of $G$.
\end{prop}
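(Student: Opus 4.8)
The plan is to retain the adversarial skeleton of the proof of Theorem~\ref{thm:membershipLowerBd}, but to replace the global $k$-separability test --- which is exactly what forces the exponential running time --- by a purely \emph{local} greedy recoloring test, and then to replace the appeal to Theorem~\ref{thm:unique-kCol} by an elementary minimum-degree count. The adversary maintains an auxiliary graph $H$ on $V$, $|V|=n$, a proper $k$-coloring $\chi$ of $H$ that uses all $k$ colors, and a set of \emph{frozen} vertices, initially empty. Throughout, the partition of $V$ into color classes of $\chi$ will be a hidden graph (a disjoint union of cliques) consistent with every answer given so far. On a query $\alpha(x,y)$ the adversary does the following:
\begin{itemize}
\item if $\chi(x)\neq\chi(y)$, it adds $xy$ to $H$ and answers ``no'';
\item else if some $z\in\{x,y\}$ is not frozen and has a free color, i.e.\ $[k]\setminus(\{\chi(x)\}\cup\chi(N_H(z)))\neq\emptyset$, it recolors such a $z$ to a free color, adds $xy$ to $H$, and answers ``no'';
\item otherwise it freezes both $x$ and $y$ and answers ``yes'' (adding no edge).
\end{itemize}
Each step is clearly computable in $\poly(n,k)$ time.

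First I would verify that the adversary never contradicts itself: by induction on the number of queries, after each query $\chi$ is a proper $k$-coloring of $H$ using all $k$ colors, and $\chi$ is monochromatic on every ``yes'' pair (note the free-color condition includes $\chi(x)$, so the recolored vertex of $\{x,y\}$ does get a color different from its partner, and since it lies in a class of size $\ge 2$ no class becomes empty). The only subtlety is the recoloring step, and here the point is that a vertex becomes frozen exactly when it first enters a ``yes'' pair, while the recoloring step only recolors non-frozen vertices; so no recoloring ever disturbs a past ``yes''. Consequently, at every stage the color classes of $\chi$ form a $k$-partition consistent with all answers, so the adversary is never stuck.

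Next, suppose $\mathcal A$ halts having learned the partition, so the color classes of $\chi$ are the \emph{unique} $k$-partition consistent with the answers given. I claim every vertex $v$ whose color class has size at least $2$ satisfies $\deg_H(v)\ge k-1$. If not, then at most $k-2$ colors appear on $N_H(v)$, so $v$ has a free color $c\neq\chi(v)$, and recoloring $v$ to $c$ keeps $\chi$ proper and keeps all $k$ classes nonempty. If $v$ is not frozen, this recoloring keeps $\chi$ monochromatic on every ``yes'' pair (all such pairs consist of frozen vertices, which we do not touch), producing a second consistent $k$-partition --- contradicting uniqueness. If $v$ is frozen, look at the first query at which $v$ was frozen: there $v$ was not yet frozen, so the ``yes'' answer was issued because $v$ had no free color, i.e.\ $\chi(N_H(v))=[k]\setminus\{\chi(v)\}$ at that time, forcing $\deg_H(v)\ge k-1$ then and hence now --- again a contradiction. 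Since $\chi$ uses exactly $k$ colors, at most $k$ vertices lie in singleton classes, so at least $n-k$ vertices have $H$-degree $\ge k-1$; hence $|E(H)|\ge\tfrac12(n-k)(k-1)$. Each query adds at most one edge to $H$, so $\mathcal A$ makes at least $\tfrac12(n-k)(k-1)$ queries.

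The hard part is the bookkeeping around the greedy recoloring: a naive recoloring rule would let the adversary either become inconsistent or lose track of which $k$-partitions remain consistent with the transcript, and it is the freezing device that keeps these under control. So the two places that need care are (i) showing that being ``stuck'' at a ``yes'' answer really does certify $\deg_H(v)\ge k-1$ at that moment (so that this degree bound is monotone under later edge insertions), and (ii) showing that the end-of-run recoloring of a low-degree, non-frozen vertex yields a partition consistent with \emph{all} past answers, not merely with the edges of $H$.
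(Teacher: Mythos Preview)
Your proof is correct and shares the paper's core idea --- a $\poly(n,k)$ adversary that answers ``no'' via a purely local greedy recoloring whenever one endpoint has $H$-degree below $k-1$ --- but your bookkeeping differs from the paper's in two ways. First, when the adversary is forced to say ``yes'', the paper \emph{contracts} the two big vertices in $H$, whereas you \emph{freeze} them and leave $H$ intact; your freezing device is what lets you argue cleanly that recoloring a non-frozen vertex never disturbs a past ``yes''. Second, at termination the paper does a case split on how many contractions occurred and, in the light case, invokes Theorem~\ref{thm:unique-kCol} on the contracted (uniquely $k$-colorable) graph; you instead give a direct minimum-degree argument (every vertex in a non-singleton class has $H$-degree $\ge k-1$) and sum. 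Your route is a little more self-contained --- it avoids the case split and the external appeal to Theorem~\ref{thm:unique-kCol} --- at the cost of the extra care you flag in points (i) and (ii), both of which you handle correctly. The paper's contraction approach, on the other hand, makes the ``yes'' consistency automatic (identified vertices can never be separated) but then has to track a shrinking vertex set. Either way the same bound $\tfrac12(n-k)(k-1)$ falls out.
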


\begin{proof}
	As in the original proof, we initialize an empty graph $H$ with $|V(H)|=n$, and pick any $k$-coloring $\chi:V(H) \to \{1,\ldots, k\}$ of $H$.
	Let $G$ be the hidden graph corresponding to $H$, where each color class corresponds to a partition.
	For the sake of simplicity, we shall call vertices of $H$ with degree at least $k-1$ \emph{big}, and those with degree less than $k-1$ \emph{small}.
	
	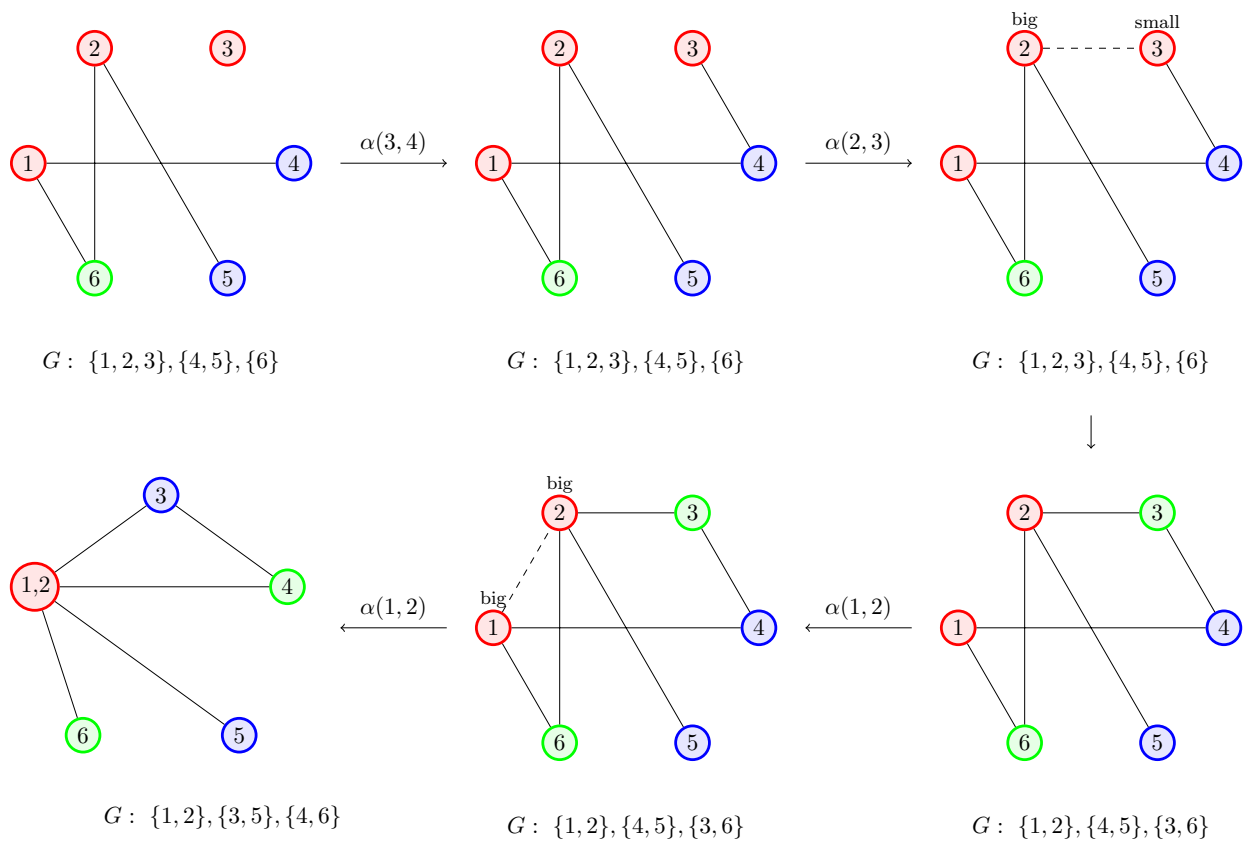
\begin{figure}[ht]
		\resizebox{\textwidth}{!}{
			\begin{tikzpicture}
				\tikzstyle{redvertex}=[align=center, inner sep=0pt, text centered, circle,black,fill=red!10,draw=red,text width=1.3em,very thick]
				\tikzstyle{bigredvertex}=[align=center, inner sep=2pt, circle,black,fill=red!10,draw=red,text width=1.2em,very thick]
				\tikzstyle{greenvertex}=[align=center, inner sep=0pt, text centered, circle,black,fill=green!10,draw=green,text width=1.3em,very thick]
				\tikzstyle{bluevertex}=[align=center, inner sep=0pt, text centered, circle,black,fill=blue!10,draw=blue,text width=1.3em,very thick]
				\begin{scope}
					\node [bluevertex] (x4) at (0:2){$4$};
					\node [redvertex] (x3) at (60:2){$3$};
					\node [redvertex] (x2) at (120:2){$2$};
					\node [redvertex] (x1) at (180:2){$1$};
					\node [greenvertex] (x6) at (240:2){$6$};
					\node [bluevertex] (x5) at (300:2){$5$};
					\draw (x4)--(x1)--(x6)--(x2)--(x5);
					\draw (270:3) node {$G:\ \{1,2,3\}, \{4,5\}, \{6\}$};
				\end{scope}
				
				\draw (3.5,0) node [above] {$ \alpha(3,4)$};
				\draw [->] (2.7,0)--(4.3,0);
				
				\begin{scope}[shift={(7,0)}]
					\node [bluevertex] (x4) at (0:2){$4$};
					\node [redvertex] (x3) at (60:2){$3$};
					\node [redvertex] (x2) at (120:2){$2$};
					\node [redvertex] (x1) at (180:2){$1$};
					\node [greenvertex] (x6) at (240:2){$6$};
					\node [bluevertex] (x5) at (300:2){$5$};
					\draw (x3)--(x4)--(x1)--(x6)--(x2)--(x5);
					\draw (270:3) node {$G:\ \{1,2,3\}, \{4,5\}, \{6\}$};
				\end{scope}
				
				\draw (10.5,0) node [above] {$ \alpha(2,3)$};
				\draw [->] (9.7,0)--(11.3,0);
				
				\begin{scope}[shift={(14,0)}]
					\node [bluevertex] (x4) at (0:2){$4$};
					\node [redvertex] (x3) at (60:2){$3$};
					\node [redvertex] (x2) at (120:2){$2$};
					\node [redvertex] (x1) at (180:2){$1$};
					\node [greenvertex] (x6) at (240:2){$6$};
					\node [bluevertex] (x5) at (300:2){$5$};
					\draw (x3)--(x4)--(x1)--(x6)--(x2)--(x5);
					\draw [dashed] (x2)--(x3);
					\draw (x2) node [above=.5em] {\footnotesize big};
					\draw (x3) node [above=.5em] {\footnotesize small};
					\draw (270:3) node {$G:\ \{1,2,3\}, \{4,5\}, \{6\}$};
				\end{scope}
				\draw [->] (14,-3.8)--(14,-4.3);
				
				\begin{scope}[shift={(14,-7)}]
					\node [bluevertex] (x4) at (0:2){$4$};
					\node [greenvertex] (x3) at (60:2){$3$};
					\node [redvertex] (x2) at (120:2){$2$};
					\node [redvertex] (x1) at (180:2){$1$};
					\node [greenvertex] (x6) at (240:2){$6$};
					\node [bluevertex] (x5) at (300:2){$5$};
					\draw (x3)--(x4)--(x1)--(x6)--(x2)--(x5);
					\draw (x2)--(x3);
					\draw (270:3) node {$G:\ \{1,2\}, \{4,5\}, \{3,6\}$};
				\end{scope}
				
				\draw (10.5,-7) node [above] {$ \alpha(1,2)$};
				\draw [->] (11.3,-7)--(9.7,-7);
				
				\begin{scope}[shift={(7,-7)}]
					\node [bluevertex] (x4) at (0:2){$4$};
					\node [greenvertex] (x3) at (60:2){$3$};
					\node [redvertex] (x2) at (120:2){$2$};
					\node [redvertex] (x1) at (180:2){$1$};
					\node [greenvertex] (x6) at (240:2){$6$};
					\node [bluevertex] (x5) at (300:2){$5$};
					\draw (x3)--(x4)--(x1)--(x6)--(x2)--(x5);
					\draw (x2)--(x3);
					\draw [dashed] (x1)--(x2);
					\draw (x1) node [above=.5em] {\footnotesize big};
					\draw (x2) node [above=.5em] {\footnotesize big};
					\draw (270:3) node {$G:\ \{1,2\}, \{4,5\}, \{3,6\}$};
				\end{scope}
				
				\draw (3.5,-7) node [above] {$\alpha(1,2)$};
				\draw [->] (4.3,-7)--(2.7,-7);
				
				\begin{scope}[shift={(0,-7)}, rotate=18]
					\node [greenvertex] (x4) at (0:2){$4$};
					\node [bluevertex] (x3) at (72:2){$3$};
					\node [bigredvertex] (x12) at (144:2){$1$,$2$};
					\node [greenvertex] (x6) at (216:2){$6$};
					\node [bluevertex] (x5) at (288:2){$5$};
					\draw (x3)--(x4)--(x12)--(x6);
					\draw (x3)--(x12)--(x5);
					\draw (270:3) node {$G:\ \{1,2\}, \{3,5\}, \{4,6\}$};
				\end{scope}
			\end{tikzpicture} 
		}
		\caption{A sample set of queries and the corresponding evolution of $H$ and the partition of $G$. Here $n=6$ and $k=3$, and vertices of degree at least $2$ are ``big". Our adversary responds ``yes" to $\alpha(1,2)$ and ``no" to every other query made by $\mathcal A$.}
		\label{fig:coloringUpdatesDegree}
	\end{figure}
	
	Suppose now that $\mathcal A$ makes a query $\alpha(x,y)$. We respond to $\mathcal A$ as follows:
	\begin{itemize}
		\item If $\chi(x)\neq \chi(y)$, we add $xy$ to $E(H)$ and reply ``no" to the algorithm.
		\item If $\chi(x)=\chi(y)$ and either $x$ or $y$ is \emph{small}, say $x$, then we note that $x$ has at most $k-2$ neighbors and hence two admissible colors.
		We then modify $\chi(x)$ to its other admissible color different from $\chi(y)$, add the edge $xy$ to $E(H)$, and reply ``no".
		\item Finally, if $\chi(x)=\chi(y)$ and both $x$ and $y$ are \emph{big}, we identify (i.e., contract) vertices $x$ and $y$.
		In terms of $G$, this would mean $x$ and $y$ belong to the same component.
	\end{itemize}
	
	We illustrate some intermediate steps in the evolution of $H$ and $\chi$ in Figure~\ref{fig:coloringUpdatesDegree}.

	Note that $\chi$ is always a proper $k$-coloring of $H$, and hence induces a partition of $G$ into $k$ components. Suppose that $\mathcal A$ learns a unique partition of $G$ into $k$ components.
	Then, there are two cases to consider:
	\begin{itemize}
		\item {\bf Case 1. The adversary contracts at least $\frac n2$ vertices:} 
		In this case, as each contracted vertex has degree at least $k-1$, $\mathcal A$ must have made at least $\frac n2(k-1)$ queries.
		\item {\bf Case 2. The adversary contracts less than $\frac n2$ vertices:}
		In this case, $\mathcal A$ can learn a unique partition of $G$ iff $\chi$ is a unique $k$-coloring of $H$. Since $H$ started with $n$ vertices, $H$ is a graph on at least $\frac n2$ vertices which is uniquely $k$-colorable. By Theorem~\ref{thm:unique-kCol}, $H$ has at least $\frac n2(k-1) - \binom k2$ edges.
	\end{itemize}
	
	In either case, $\mathcal A$ makes at least $\frac n2(k-1)-\binom k2=\frac12(n-k)(k-1)$ queries, as desired.
\end{proof}

\end{document}